\newcommand{\m}{ReWatt}
\newtheorem{lemma}{Lemma}
\newtheorem{corollary}{Corollary}
\newtheorem{definition}{Definition}
\title{Attacking Graph Convolutional Networks via Rewiring}
\author{%
 Yao Ma \\
  Michigan State University\\
     \texttt{mayao4@msu.edu} \\
   \And
   Suhang Wang \\
    The Pennsylvania State University\\
   \texttt{szw494@psu.edu} \\
      \And
   Tyler Derr \\
   Michigan State University\\
   \texttt{derrtyle@msu.edu} \\
   \AND
   Lingfei Wu\\
   IBM T. J. Watson Research Center \\
   \texttt{wuli@us.ibm.com } \\
   \And
   Jiliang Tang \\
   Michigan State University \\
   \texttt{tangjili@msu.edu} \\
}
\begin{document}

\maketitle

\begin{abstract}
Graph Neural Networks (GNNs) have boosted the performance of many graph related tasks such as node classification and graph classification. Recent researches show that graph neural networks are vulnerable to adversarial attacks, which deliberately add carefully created unnoticeable perturbation to the graph structure. The perturbation is usually created by adding/deleting a few edges, which might be noticeable even when the number of edges modified is small. In this paper, we propose a graph rewiring operation which affects the graph in a less noticeable way compared to adding/deleting edges. We then use reinforcement learning to learn the attack strategy based on the proposed rewiring operation. Experiments on real world graphs demonstrate the effectiveness of the proposed framework. To understand the proposed framework, we further analyze how its generated perturbation to the graph structure affects the output of the target model.
\end{abstract}

\section{Introduction}\label{sec:introduction}

Graph structured data are ubiquitous in many real world applications. Various data from different domains, such as social networks, molecular graphs and transportation networks can all be modeled as graphs. Recently, increasing effort has 
been devoted towards developing deep neural networks on graph structured data. This stream of works, which is known as Graph Neural Networks (GNN)~\cite{} has shown to enhance the performance in many graph related tasks such as node classification~\citep{kipf2016semi,hamilton2017inductive} and graph classification~\citep{bruna2013spectral,defferrard2016convolutional,ying2018hierarchical,zhang2018end}.

Recent researches have shown that deep neural networks are highly vulnerable to adversarial attacks~\citep{szegedy2013intriguing,goodfellow2014explaining,kurakin2016adversarial,carlini2017towards}. In computer vision, performing an adversarial attack is to add deliberately created, but unnoticeable, perturbation to a given image such that the deep model misclassifies the perturbed image. Unlike image data, which can be represented in the continuous space, graph structured data is discrete. Few efforts have been made to investigate the robustness of graph neural networks against adversarial attacks. Only recently, such researches about adversarial attacks on graph structured data started to emerge. ~\citet{zugner2018adversarial1} proposed a greedy algorithm to attack the semi-supervised node classification task. Their method deliberately tries to modify the graph structure and node features such that the label of a targeted node can be changed. \citet{dai2018adversarial} proposed a reinforcement learning based algorithm to attack both node classification and graph classification task by only modifying the graph structure. \citet{zugner2018adversarial_2} designed a meta-learning based attack method to impair the overall performance of the node classification task. In these aforementioned works, the graph structure is modified by adding or deleting edges.

To ensure the difference between the attacked graph and the original graph is ``unnoticeable'', the number of actions (adding/deleting edges) that can be taken by the attacking algorithms is usually constrained by a budget. However, even when this budget is small, adding or deleting edges can still make ``noticeable'' changes to the graph structure. For example, it is evident that many important graph properties are based on eigenvalues and eigenvectors of the Laplacian matrix of the graph~\citep{chan2016optimizing}; while adding or deleting an edge can make remarkable changes on the eigenvalues/eigenvectors of the graph Laplacian~\cite{ghosh2006growing}. Thus, in this work, we propose a new operation based on graph rewiring. A single rewiring operation involves three nodes $(v_{fir},v_{sec},v_{thi})$, where we remove the existing edge between $v_{fir}$ and $v_{sec}$ and add edge between $v_{fir}$ and $v_{thi}$. Note that $v_{thi}$ is constraint to be the 2-hop neighbor of $v_{fir}$ in our setting. It is obvious that the proposed rewiring operation preserves some basic properties of the graph such as number nodes and edges, total degrees of the graph and etc, while operations like adding and deleting edges cannot. Furthermore, the proposed rewiring operation affects some of the important measures based on graph Laplacian such as algebraic connectivity in a smaller way than adding/deleting edges, which we will theoretically show in Section~\ref{sec:theory}. In addition, the rewiring operation is a more natural way to modify the graph. For example, in biology, the evolution of DNA and amino acid sequences could lead to pervasive rewiring of protein–protein interactions~\citep{Zitnik4426}.

In this paper, we aim to construct adversarial examples by performing rewiring operations for the task of graph classification. More specifically, we treat the process of applying a series of rewiring operations to a given graph as a discrete Markov decision process (MDP) and use reinforcement learning to learn how to make these decisions. We demonstrate the effectiveness of the proposed algorithm on real-world graphs and further analyze how the adversarial changes in the graph structure affect both the graph embedding learned by the graph neural network model and the output label.
\section{Background}\label{sec:background}
In this section, we introduce notations and the target graph convolutional model we seek to attack. 
We denote a graph as $G=\{\mathcal{V},\mathcal{E}\}$, where $\mathcal{V}=\{v_1,
\dots,v_{|\mathcal{V}|}\}$ and $\mathcal{E}=\{e_1, \dots, e_{|\mathcal{E}|}\}$ are the sets of nodes and edges, respectively. The edges describe the relations between nodes, which can be described by an adjacency matrix ${\bf A}\in \{0,1\}^{|\mathcal{V}|\times |\mathcal{V}|}$. $\mathbf{A}_{ij}=1$ means $v_i$ and $v_j$ are connected, 0 otherwise. Each node in the graph has some features that are associated with it. These features are represented as a matrix ${\bf X}\in \mathbb{R}^{|\mathcal{V}|\times d}$, where the $i$-th row of ${\bf X}$ denotes the node features of node $v_i$ and $d$ is the dimension of features. Thus, an attributed graph can be represented as $G=\{\bf A, X\}$.

\subsection{Graph Classification}
In the setting of graph classification, we are given a set of graphs $\mathcal{G} = \{G_i\}$. Each of these graphs $G_i$ is associated with a label $y_i$. The task is to build a good classifier using the given set of graphs such that it can make correct predictions when new unseen graphs are fed into it. A graph classifier parameterized by $\theta$ can be represented as $f(G|\theta) = y^o$, where $y^o$ denotes the label of a graph $G\in \mathcal{G}$ predicted by the classifier. The parameters $\theta$ in the classifier $f(\cdot|\theta)$ can be learned by solving the following optimization problem \begin{small}$\min_{\theta} \sum_i L(f(G_i|\theta), y_i)$\end{small},  where $L(\cdot, \cdot)$ is used to measure the difference between the predicted and ground truth labels. 
Cross entropy is a commonly adopted measurement for $L(\cdot, \cdot)$.

\subsection{Graph Convolution Networks}
Recently, Graph Neural Networks have been shown to be effective in graph representation learning. These models usually learn node representations by iteratively aggregating, transforming and propagating node information. In this work, we adopt the graph convolutional networks (GCN)~\citep{kipf2016semi}. A graph convolutional layer in the GCN framework can be represented as
\begin{align}
    {\mathbf{F}^{j}} = ReLU({{\bf D}^{-\frac{1}{2}}\bf A}{\bf D}^{-\frac{1}{2}} {\bf F}^{j-1} {{\bf W}^{j}})
    \label{eq:conv}
\end{align}
where ${\bf F}^{j}\in \mathbb{R}^{N\times d_{j}}$ is the output of the $j$-th layer and ${\bf W}^{j}$ represents the parameters of this layer. A GCN model usually consists of $J$ graph convolutional layers, with ${\bf F}^0 = {\bf X}$. The output of the GCN model is ${\bf F}^J$, which is denote as ${\bf F}$ for convenience. To obtain a graph level embedding ${\bf u}_{G}$ for graph $G$ to perform graph classification, we apply a global pooling over the node embeddings. 
\begin{align}
    {\bf u}_{G} = pool({\bf F})
    \label{eq:pool}
\end{align}
Different global pooling functions can be used, and we adopt the max pooling in this work. A multilayer perceptron (MLP) and softmax layer are then sequentially applied on the graph embedding to predict the label of the graph
\begin{align}
    y^o = \text{argmax}  ~softmax(MLP({\bf u}_{G}|{\bf W}_{MLP}))
    \label{eq:pred}
\end{align}
where $MLP(\cdot|{\bf W}_{MLP})$ denotes the multilayer perceptron with parameters as ${\bf W}_{MLP}$. A GCN-based classifier for graph classification can be described using eq.~\eqref{eq:conv}, \eqref{eq:pool} and \eqref{eq:pred} as introduced above. For simplicity, we summarize it as $y^o = f_{GCN}(G|\theta_{GCN})$, where $\theta_{GCN}$ includes all the parameters in the model.

\section{Problem Formulation}\label{sec:problem}
In this work, we aim to build an attacker $\mathcal{T}$ that takes a graph as input and modify the structure of the graph to fool a GCN classifier. Modifying a graph structure is equivalent to modify its adjacency matrix. The function of the attacker can be represented as follows
\begin{align}
\small
    \Tilde{G} = \mathcal{T}(G) = \{\mathcal{T}({\bf A}), {\bf X}\} = \{\Tilde{\bf A},{\bf X}\}
\end{align}
Given a classifier $f(\cdot)$, the goal of the attacker is to modify the graph structure so that the classifier outputs a different label from what it originally predicted. Note here, we neglect the $\theta$ inside $f(\cdot)$, as the classifier is already trained and fixed. Mathematically, the goal of the attacker can be represented as: $f(\mathcal{T}(G)) \not= f(G)$.

As described above, in fact, the attacker $\mathcal{T}$ is specifically designed for a given classifier $f(\cdot)$. To reflect this in the notation, we now denote the attacker for the classifier $f(\cdot)$ as $\mathcal{T}_f$. In our work, the attacker $\mathcal{T}_f$ has limited knowledge of the classifier. The only information the attacker can get from the classifier is the label of (modified) graphs. In other words, the classifier $f(\cdot)$ is treated as a black-box model for the attacker $\mathcal{T}_f$.

An important constraint to the attacker $\mathcal{T}_f$ is that it is only allowed to make ``unnoticeble'' changes to the graph structure. To account for this, we propose the \textit{rewiring} operation, which is supposed to make more subtle changes than adding or deleting edges. We will show that the rewiring operation can better preserve a lot of important properties of the graph compared to adding or deleting edges in Section~\ref{sec:theory}. The definition of the proposed rewiring is given below: 

\begin{definition}
A rewiring operation ${\bf a}$ involves three nodes and it can be denoted as ${\bf a}=\{v_{fir}, v_{sec}, v_{thi}\}$, where $v_{sec}\in N^1(v_{fir})$ and $v_{thi}\in N^2(v_{fir})/N^1(v_{fir})$.  $N^k(v_{fir})$ denotes the $k$-th hop neighbors of $v_{fir}$ and the sign $/$ stands for exclusion. The rewiring operation deletes the existing edge between nodes $v_{fir}$ and $v_{sec}$, while adding an edge to connect nodes $v_{fir}$ and $v_{thi}$. 
\end{definition}

The attacker $\mathcal{T}_f$ is given a budget of $K$ proposed rewiring operations to modify the graph structure. A straightforward way to set $K$ is choosing a small fix number. However, it is likely that graphs in a given data set have various graph sizes. The same number of rewiring operations can affect the graphs of different size in various magnitude. Thus, it may not be appropriate to use the same $K$ for all the graphs. A more suitable way is to allow flexible number of rewiring operations according to the graph size. Thus, we propose to use $K=p\cdot |\mathcal{E}|$ for a given graph $G$, where $p\in (0,1)$ is a ratio. 

The process of the attacker on a graph $G$ can be now denoted as:
\begin{align}
    \mathcal{T}_f (G) \leftrightarrow (a_1, a_2,\dots, a_M)[G]
\end{align}
where the right hand part means to sequentially apply the rewiring operations $a_1,\dots, a_M$ to the graph $G$, and $M$ is the number of rewiring operations taken with $M\leq K$.

\section{Rewiring-based Attack to Graph Convolutional Networks}

Next, we first discuss the properties of proposed rewiring operation to show its advantages. We then introduce the proposed attacking framework $\text{\m}$ based on reinforcement learning and rewiring.

\subsection{Properties of the Proposed Rewiring Operation}\label{sec:theory}

The proposed rewiring operation has several advantages compared to simply adding or deleting edges. One obvious advantage of the proposed rewiring operation is that it does not change the number of nodes, the number of edges and the total degree of a graph. However, operations like ``adding'' or ``deleting'' edges may change those properties. 

Many important graph properties are based on the eigenvalues of the Laplacian matrix of a graph~\citep{chan2016optimizing} such as Algebraic Connectivity~\cite{fiedler1973algebraic} and Effective Graph Resistance~\cite{ellens2011effective}. A detailed description of Algebraic Connectivity and Effective Graph Resistance are given in Appendix~{\color{blue}}. Next, we demonstrate that the proposed rewiring operation is likely to make smaller changes to eigenvalues, which result in unnoticeable changes under graph Laplacian based measures. For a graph $G$ with ${\bf A}$ as its adjacency matrix, its Laplacian matrix ${\bf L}$ is defined as ${\bf L} = {\bf D-A}$, where ${\bf D}$ is the diagonal degree matrix~\citep{mohar1991laplacian}. Let ${\lambda_1,\dots,\lambda_{|\mathcal{V}|}}$ denote the eigenvalues of the Laplacian matrix arranged in the increasing order with ${\bf x}_1, \dots, {\bf x}_{|\mathcal{V}|}$ being the corresponding eigenvectors. 
We show how a single proposed rewiring operation affects the eigenvalues.
Our analysis is based on the following lemma:
\begin{lemma}\citep{stewart1990matrix}
Let $(\alpha_i,{\bf h}_i) $ be the eigen-pairs of a symmetric matrix ${\bf M}\in \mathbb{R}^{N\times N}$. Given a perturbation $\Delta {\bf M}$ to matrix ${\bf M}$, its eigenvalues can be updated by $\Delta \alpha_i = {\bf h}_i^T \Delta {\bf M} {\bf h}_i.$
\end{lemma}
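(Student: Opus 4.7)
The plan is to treat this as the standard first-order eigenvalue perturbation formula and derive it by differentiating the eigenvalue equation with respect to a perturbation parameter. I would introduce a one-parameter family $\mathbf{M}(\epsilon) = \mathbf{M} + \epsilon\,\Delta \mathbf{M}$ so that $\mathbf{M}(0) = \mathbf{M}$ and $\mathbf{M}(1) = \mathbf{M} + \Delta\mathbf{M}$. Since $\mathbf{M}$ is symmetric and the perturbation is analytic in $\epsilon$, standard results (e.g., Rellich's theorem) guarantee that, at least for simple eigenvalues, there exist smooth curves $\alpha_i(\epsilon)$ and $\mathbf{h}_i(\epsilon)$ of eigenvalues and normalized eigenvectors with $\alpha_i(0)=\alpha_i$, $\mathbf{h}_i(0)=\mathbf{h}_i$, and $\|\mathbf{h}_i(\epsilon)\|_2 = 1$ on a neighborhood of $0$.

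Next I would differentiate the identity $\mathbf{M}(\epsilon)\mathbf{h}_i(\epsilon) = \alpha_i(\epsilon)\mathbf{h}_i(\epsilon)$ with respect to $\epsilon$ to obtain
\begin{equation*}
\Delta\mathbf{M}\,\mathbf{h}_i(\epsilon) + \mathbf{M}(\epsilon)\,\mathbf{h}_i'(\epsilon) \;=\; \alpha_i'(\epsilon)\,\mathbf{h}_i(\epsilon) + \alpha_i(\epsilon)\,\mathbf{h}_i'(\epsilon).
\end{equation*}
Evaluating at $\epsilon = 0$ and left-multiplying by $\mathbf{h}_i^T$, the cross term $\mathbf{h}_i^T\mathbf{M}\,\mathbf{h}_i'(0)$ equals $\alpha_i\,\mathbf{h}_i^T\mathbf{h}_i'(0)$ by symmetry of $\mathbf{M}$, which cancels the matching term on the right. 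Using $\mathbf{h}_i^T\mathbf{h}_i = 1$ leaves $\alpha_i'(0) = \mathbf{h}_i^T \Delta\mathbf{M}\,\mathbf{h}_i$. A first-order Taylor expansion then yields $\Delta\alpha_i = \alpha_i(1) - \alpha_i(0) \approx \mathbf{h}_i^T \Delta\mathbf{M}\,\mathbf{h}_i$, which is the claimed formula.

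The main obstacle I expect is interpretational rather than algebraic: the displayed identity in the lemma is really a first-order approximation in the size of $\Delta\mathbf{M}$, not an exact equality, so I would state explicitly that the result is understood modulo $O(\|\Delta\mathbf{M}\|^2)$. A secondary subtlety is the case of repeated eigenvalues, where the smooth branch $\mathbf{h}_i(\epsilon)$ need not reduce to an arbitrarily chosen $\mathbf{h}_i$; I would handle this by noting that one may pick, within the eigenspace of $\alpha_i$, a basis that diagonalizes the projection of $\Delta\mathbf{M}$, and the formula then holds for each such basis vector. Since the application in Section~\ref{sec:theory} only compares the effect of different single-edge-level perturbations on eigenvalues, this first-order expression is sufficient for the subsequent analysis.
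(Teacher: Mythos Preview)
Your derivation is correct and is the standard first-order eigenvalue perturbation argument. The paper itself does not supply a proof of this lemma; it simply cites \citep{stewart1990matrix} and moves on, so there is nothing substantive to compare against beyond noting that your argument is precisely the classical one found in that reference. Your caveats about the formula being a first-order approximation and about repeated eigenvalues are accurate and, if anything, more careful than the paper's treatment.
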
\label{lemma:eigenvalue}

The proof can be found in~\citep{stewart1990matrix}. Using this lemma, we have the following corollary

\begin{corollary}
For a given graph $G$ with Laplacian matrix ${\bf L}$, one proposed rewiring operation $(v_{fir},v_{sec},v_{thi})$ affects the eigen-value $\lambda_i$ by $\Delta\lambda_i$, for $i=1,
\dots,|\mathcal{V}|$, where 
\begin{align}
    \Delta\lambda_i = (2{\bf x}_i[fir] - {\bf x}_i[thi] -{\bf x}_i[sec])({\bf x}_i[sec]-{\bf x}_i[thi])
\end{align}
where ${\bf x}_i[index]$ denotes the $index$-th value of the eigenvector ${\bf x}_i$.
\label{col:eff_rewiring}
\end{corollary}

The proof can be found in Appendix B.

Furthermore, each eigenvalue $\lambda_i$ of the Laplacian matrix measures the ``smoothness'' of its corresponding eigenvector ${\bf x}_i$~\citep{shuman2012emerging,sandryhaila2014discrete}. The ``smoothness'' of an eigenvector measures how different its elements are from their neighboring nodes. Thus, the first few eigenvectors with relatively small eigenvalues are rather ``smooth''. Note that in the proposed rewiring operation, $v_{sec}$ is the direct neighbor of $v_{fir}$ and ${v_{thi}}$ is the $2$-hop neighbor of $v_{fir}$. Thus, the difference ${\bf x}_i[fir] - {\bf x}_i[thi]$ is expected to be smaller than the difference ${\bf x}_i[fir] - {\bf x}_i[can]$, where ${\bf x}_i[can]$ can be any other node that is further away. This means that the proposed rewiring operation (to $2$-hop neighbors) is likely to make smaller changes to the first few eigenvalues than rewiring to any further away nodes or adding an edge between two nodes that are far away from each other.

\subsection{Graph Adversarial Attack with Reinforcement Learning}\label{sec:model}
Given a graph $G$, the process of the attacker $\mathcal{T}$ is a general decision making process $M=(\mathcal{S},\mathcal{A},P,R)$, where $\mathcal{A}=\{a_t\}$ is the set of actions, which consists of all valid rewiring operations, $\mathcal{S} = \{s_t\}$ is the set of states that consists of all possible intermediate and final graphs after rewiring, $P$ is the transition dynamics that describes how a rewiring action $a_t$ changes the graph structure $p(s_{t+1}|,s_t,\dots,s_1, a_t)$. $R$ is the reward function, which gives the reward for the action taken at a given state. Thus, the procedure of attacking a graph can be described by a trajectory $(s_1, a_1, r_1,\dots, s_M, a_M, r_M )$, where $s_1=G$. The key point for the attacker is to learn how to make the decision of picking a suitable rewiring action when at the state $s_t$. This can be done by learning a policy network to get the probability $p(a_t|s_t,\dots, s_1)$ and sample the rewiring operation correspondingly. Modelling in this way, the decision making at a state $s_t$ is dependant on all its previous states, which could be difficult to model due to the long-term dependency. It is easy to notice that the intermediate states $s_t$ are all predicted to have the same label as the original graph. Thus, we can treat each of the states as a brand new graph to be attacked regardless of what leads to it. That is to say, the decision making at the state $s_t$ can be solely dependant on the current state, $p(a_t|s_t,\dots, s_1)=p(a_t|s_t)$. Thus, we model the process of attack as a Markov Decision Process (MDP)~\cite{sutton2018reinforcement}. Hence, we adopt reinforcement learning to learn how to make effective decisions. We name the proposed framework as $\text{\m}$.
The key elements of the environment for the reinforcement learning are defined as follows:

{\bf State Space} The state space of the environment consists of all the intermediate graphs generated after all the possible rewiring operations.
\vskip -0.2em
{\bf Action Space} The action space consists of all the valid rewiring operations as defined in Definition 1. Note that the valid action space is dynamic when the state changes, as the $k$-th hop neighbors are different in different states.  
\vskip -0.2em
{\bf State Transition Dynamics} Given an action (rewiring operation) $a_t = \{v_{fir}, v_{sec},v_{thi}\}$ at state $s_t$. The next state $s_{t+1}$ is achieved by deleting the edge between $v_{fir}$ and $v_{sec}$ in the current state $s_t$ and adding an edge to connect $v_{fir}$ with $v_{thi}$.
\vskip -0.2em
{\bf Reward Design} The main goal of the attacker is to make the classifier $f(\cdot)$ predict a different label than originally predicted. We also encourage the attacker to take as few actions as possible so that the modification to the graph structure is minimal. Thus, we assign a positive reward when the attack is successful and assign a negative reward for each action step taken. The reward $R(s_t, a_t)$ is given as
    \[ R(s_t,a_t)= 
\left \{
  \begin{tabular}{cc}
  1 & \text{if } $f(s_t) \not=f(s_1)$;  \\
  $n_r$ & \text{if } $f(s_t) = f(s_1).$  \\
  \end{tabular}
\right.
\]
where $n_r$ 
is the negative reward to penalize each step taken. Similar to how we set a flexible rewiring budget $K$, here we propose to use $n_r=-\frac{1}{K}= -\frac{1}{p\cdot |\mathcal{E}|}$, which depends on the size of the graph.
\vskip -0.2em
{\bf Termination} The attack process will stop either when the number of actions reaches the budget $K$ or the attacker successfully ``changed'' the label of the slightly modified graph.

\subsection{Policy Network}
In this subsection, we introduce the policy network to learn the policy $p(a_t|s_t)$ on top of the graph representations learned by GCN. However, this GCN is different from the target classifier one, since it has $2$ convectional layers. 
To choose a valid proposed rewiring action, we decompose the rewiring action to $3$ steps: 
1) choosing an edge $e_t=(v_{e_1},v_{e_2})$ from the set of edges of the intermediate graph $s_t$; 2) determining $v_{{e_t}_1}$ or $v_{{e_t}_2}$ to be $v_{{fir}_t}$ and the other to be $v_{{sec}_t}$; and 3) choosing the third node $v_{{thi}_t}$ from $N_{s_t}^2(v_{{fir}_t})/N_{s_t}^1(v_{{fir}_t})$. Correspondingly, we decompose $p(a_t|s_t)$ as  follows 
\begin{align}
    p(a_t|s_t) = p_{edge}(e_t|s_t)\cdot  p_{fir}(v_{{fir}_t}|e_t,s_t) \cdot p_{thi}(v_{{thi}_t}|v_{{fir}_t},e_t,s_t)
    \label{eq:decomposation}
\end{align}
We design three policy networks based on GCN to estimate the three distributions in the right hand of the equation~\eqref{eq:decomposation}, which will be introduced next. To select an edge from the edge set $\mathcal{E}_{s_t}$, we generate the edge representation from the node representations ${\bf F}_{s_t}\in \mathbb{R}^{|\mathcal{V}_{s_t}| \times d_F}$ learned by GCN. For an edge $e=(v_{{e}_1},{v_{e_2}})$, the edge representation can be represented as ${\bf e} = concat({\bf u}_{s_t} , h({\bf F}_{s_t}[e_1,:], {\bf F}_{s_t}[e_2,:]))$, where ${\bf u}_{s_t}$ is the graph representation of the state $s_t$, $h(\cdot, \cdot)$ is a function to combine the two node representations and $concat(\cdot,\cdot)$ denotes the concatenation operation. We include $\mathbf{u}_{st}$ in the representation of the edge to incorporate the graph information when making the decision. The representation of all the edges in $\mathcal{E}_{s_t}$ can be represented as a matrix ${\bf E}_{s_t} \in \mathbb{R}^{|\mathcal{E}_{s_t}| \times 2d_F}$, where each row represents an edge. The probability distribution over all the edges can be represented as 
\begin{align}
    p_{edge}(\cdot|s_t) = softmax(MLP({\bf E}_{s_t}|\theta_{edge})),
    \label{eq:p_edge}
\end{align}
where we use $MLP(\cdot|\theta_{edge})$ to denote a Multilayer Perceptron that maps ${\bf E}_{s_t} \in \mathbb{R}^{|\mathcal{E}_{s_t}| \times 2d_F}$ to a vector in $\mathbb{R}^{|\mathcal{E}_{s_t}|}$, which, after going through the softmax layer, represents the probability of choosing each edge. Let $e_t=(v_{{e_t}_1},v_{{e_t}_2})$ denote the edge sampled according to eq.~\eqref{eq:p_edge}. To decide which node is going to be the first node, we estimate the probability distribution over these two nodes as
\begin{align}
    p_{fir}(\cdot|e_t,s_t) = softmax(MLP([{\bf v}_{{e_t}_1},{\bf v}_{{e_t}_2}]^T |\theta_{fir}))
    \label{eq:p_fir}
\end{align}
where ${\bf v}_{{e_t}_i} = concat({\bf e}_t, {\bf F}_{s_t}[{e_t}_i,:]) \in \mathbb{R}^{3d_F}$ for $i=1,2$. 
The first node can be sampled from the two nodes $v_{{e_t}_1},v_{{e_t}_2}$ according to eq.~\eqref{eq:p_fir}. We then proceed to estimate the probability distribution $p(\cdot|v_{{fir}_t},e_t,s_t)$. For any node $v_{c} \in N^2(v_{{fir}_t})/N^1(v_{{fir}_t})$, we use $ \hat{\bf v}_{c} = concat({\bf v}_{{e_t}_1},{\bf F}_{s_t}[c,:])$ to represent it. The representations for all the nodes in $N^2(v_{{fir}_t})/N^1(v_{{fir}_t})$ can be represented by a matrix $\hat{\bf V}_{s_t} \in \mathbb{R}^{|N^2(v_{{fir}_t})/N^1(v_{{fir}_t})|\times 4d_F}$ with each row representing a node. The probability distribution of choosing the third node over all the candidate nodes can be modeled as:
\begin{align}
    p_{thi}(\cdot|v_{{fir}_t},e_t,s_t) = softmax(MLP(\hat{\bf V}_{s_t}|\theta_{thi}))
    \label{eq:p_thi}
\end{align}
The third node $v_{{thi}_t}$ can be sampled from the set of candidate nodes $N^2(v_{{fir}_t})/N^1(v_{{fir}_t})$ according to the probability distribution in eq~\eqref{eq:p_thi}. An action $a_t$ can be generated by sequentially estimating and sampling from the probability distributions in eq.~\eqref{eq:p_edge},~\eqref{eq:p_fir} and~\eqref{eq:p_thi}. 

\begin{figure}[t]
    \centering
    \vskip -1em
    \includegraphics[scale=0.33]{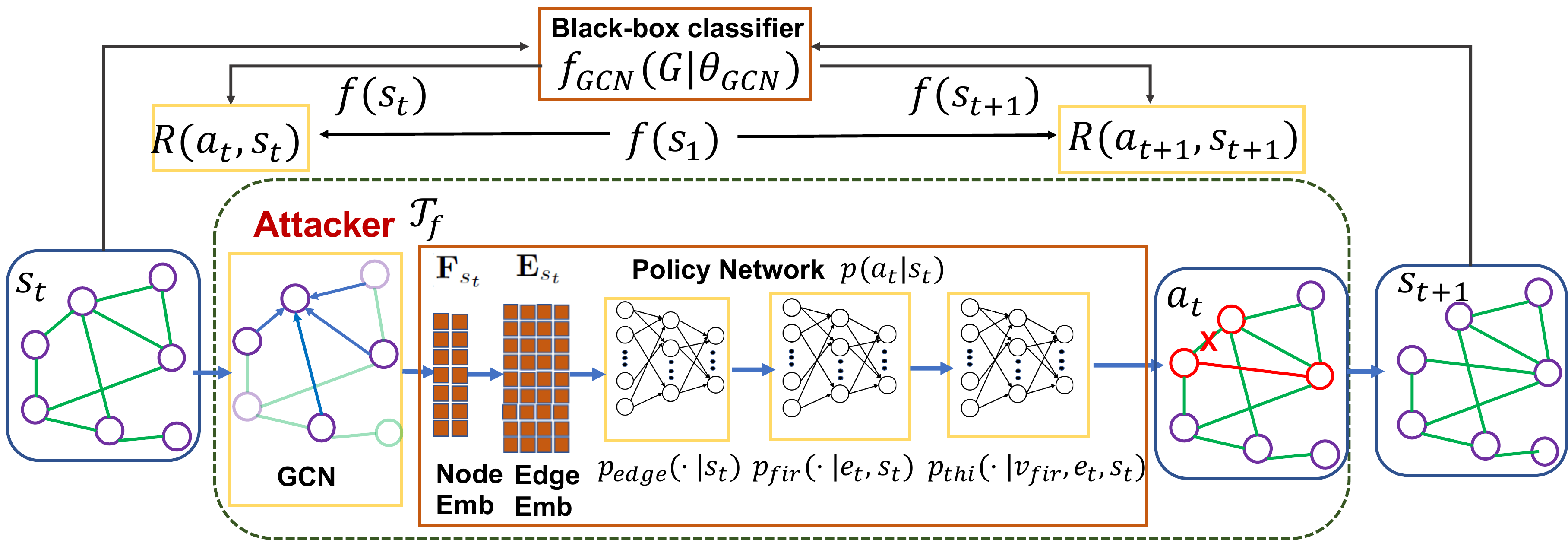}
    \vskip -0.5em
    \caption{The overall framework of \m}
    \vskip -1em
    \label{fig:framework}
\end{figure}

\subsection{Proposed Framework - $\text{\m}$}
With the rewiring and the policy network defined above, our overall framework is shown in Figure~\ref{fig:framework}. With State $s_t$, the Attacker uses GCN to learn node and edge embeddings, which are used as input to Policy Networks to make decision about the next action. Once the new action is sampled from the policy network, rewiring is performed on $s_t$ and we arrive in the new state $s_{t+1}$. We query the black-box classifier to get the prediction $f(S_{t+1})$, which is compared with $f(s_1)$ to get reward. Policy gradient~\citep{sutton2018reinforcement} is adopted to learn the policies by maximizing the rewards.

\section{Experiment}\label{sec:experiment}
In this section, we conduct experiments to evaluate the performance of the proposed framework $\text{\m}$. We also carry out a study to analyze how the trained attacker works. Some empirical investigation on the advancements of the rewiring operation can be found in Appendix~C.

\subsection{Attack Performance}
To demonstrate the effectiveness of $\text{\m}$, we conduct experiments on three widely used social network data sets~\citep{KKMMN2016} for graph classification, i.e., REDDIT-MULTI-12K, REDDIT-MULTI-5K and IMDB-MULTI~\citep{yanardag2015deep}. The statistics can be found in Appendix~D. Note that the re-wiring operation (as well as the other operations) may lead to abnormal structure of some kinds of graphs, which can make the graphs invalid, especially for chemical molecules. So in this paper, We avoid chemical related datasets but only use social networks datasets. In the social domain, if the changes are subtle,  it is most likely that we will not we will not introduce abnormal structures. Meanwhile, it is straightforward to extend our framework to datsets from the other domains if we have expertise in them. For example, if we know what structures are abnormal, we can use such knowledge to constraint the the state space of the RL framework. We leave it as one future work.

In this work, the classifier we target to attack is the GCN-based classifier as introduced in Section~\ref{sec:background}. We set the number of layers to $3$ and use max-pooling as the pooling function to get the graph representation. Note that we need to train the classifier using a fraction of the data and then treat the classifier as a black box to be attacked. We then use part of the remaining data to train the attacker and use the rest of the data to test the performance of the attacker. Thus, for each data set, we split it into three parts with the ratio of $a\%:b\%:c\%$, where $a\%$ of the data set is used to train the classifier, $b\%$ of the data set is used to train the attacker and the remaining $c\%$ of the data set is used to test the performance of the attacker. 
For the REDDIT-MULTI-12K and REDDIT-MULTI-5K data sets, we set $a=90$, $b=8$ and $c=2$. As the size of the IMDB-MULTI data set is quite small, to have enough data for testing, we set $a=50$, $b=30$ and $c=20$. 

We compare the attacking performance of the proposed framework with the RL-S2V proposed in~\citep{dai2018adversarial}, random selection method and some variants of our proposed framework. We briefly describe these baselines: 1) {\bf RL-S2V} is a reinforcement learning based attack framework~\citep{dai2018adversarial}, which allows adding and deleting edges to the graph with a fixed budget for all the graphs; 2) {\bf Random} denotes an attacker that performs the proposed rewiring operations randomly; 3) {\bf Random-s} is also based on random rewiring. Note that $\text{\m}$ can terminate before using all the budget. We record the actual number of rewiring actions made in our method and only allow the {\bf Random-s} to take exactly the same number of rewiring actions as $\text{\m}$; 4) \textbf{\m-n} denotes a variant of the $\text{\m}$, where the negative reward is fixed to $-0.5$ for all the graphs in the testing set; and 5) {\bf \m-a} is a variant of $\text{\m}$, where we allow any nodes in the graph to be the third node $v_{{thi}_t}$ instead of only $2$-hop neighbors. 

As RL-S2V only allows a fixed budget for the all the graphs, when comparing to it, for $\text{\m}$, we also fix the number of proposed rewiring operations to a fixed number $K$ for all the graphs. Note that a single proposed rewiring operation involves two edges, thus, for a fair comparison, we allow the RL-S2V to take $2K$ actions (adding/deleting edges). We set $K=1, 2, 3$ in the experiments. To compare with the random selection method and the variants of $\text{\m}$, we use flexible budget, more specially, we allow at most $p\cdot |\mathcal{E}_i|$ proposed rewiring operations for graph $G_i$. Here, $p$ is a fixed percentage and we set it to $p=1\%, 2\%, 3\%$ in our experiments. We use the success rate as measure to evaluate the performance of the attacker. A graph is said to be successfully attacked if its label is changed when it is modified within the given budget. 

\begin{table*}
	\begin{center}	
		\caption{Performance comparison in terms of success rate}
		\vskip -0.5em
		\label{tab:experiments}
		\begin{tabular} { c| c c c| c c c| c c c }
			
			\hline 
		         & \multicolumn{3}{c}{REDDIT-MULTI-12K} & \multicolumn{3}{c}{REDDIT-MULTI-5K} & \multicolumn{3}{c}{IMDB-MULTI}\\
		         \hline 
		         \hline
			K&  1 & 2  & 3   &  1 & 2 & 3  &  1 & 2 & 3  \\	
			\hline			
			\m & 14.4\%&   21.6\% &  23.4\% & 8.99\% &  16.9\% &  18.0\%& 23.0\% &  23.3\% &  23.3\%\\ [0.5ex]	
			RL-S2V &  9.46\% &   18.5\%&   21.1\% &  4.49\%  & 16.9\%&  18.0\%&  2.00\% & 6.00\% & 3.33\%  \\[0.5ex]	
			\hline   
			\hline
		p &  1\% & 2\%  & 3\%   &  1\% & 2\% & 3\%  &  1\% & 2\% & 3\%  \\
		\hline
		\m & 25.2\% & 32.9\% & 38.7\% &11.2\%&20.2\%& 27.0\%& 23.0\%&23.0\%&23.3\% \\
		\m-a &  26.1\% &35.1\%&42.8\% & 5.60\%&21.3\%&30.3\% &24.3\%&25.0\%&25.6\%\\
		\m-n & 17.6\% & 25.7\% & 31.1\% & 5.60\% &14.6\%&19.1\% & 21.3\%&21.3\%& 21.6\%\\
		random & 10.3\% & 15.7\% & 21.6\% & 3.30\% & 12.4\%& 16.9\% & 1.33\%&1.33\%&1.66\%\\
		random-s & 6.30\% & 6.70\% & 9.45\% & 5.60\% & 6.74\% & 11.0\% & 1.00\% &1.33\% &1.66\%\\
		\hline
		\end{tabular}
			\vspace{-0.2in}
	\end{center}
\end{table*}
The results are shown in Table~\ref{tab:experiments}. From the table, we can make the following observations: 1) Compared to RL-S2V, $\text{\m}$ can perform more effective attacks. Especially, in the IMDB-MULTI data set, where $\text{\m}$ outperforms the RL-S2V with a large margin; 2) $\text{\m}$ outperforms the Random method as expected. Especially, $\text{\m}$ is much more effective than Random-s which performs exactly the same number of proposed rewiring operations $\text{\m}$. This also indicates that the Random method uses more rewiring operations for successful attacking than ${\text{\m}}$; 3) The variant \m-a outperforms $\text{\m}$, which means if we do not constraint the rewiring operation to 2-hop neighbors, the performance of $\text{\m}$ can be further improved. However, as we discussed in earlier sections, this may lead to more ``noticeable'' changes of the graph structure; and 4) \m-n performs worse than our $\text{\m}$, which shows the advancement of using a flexible reward design. 

\subsection{Attacker Analysis}
In this subsection, we carry out experiments to analyze how \m's change in graph structure affects the graph representation ${\bf u}$ calculated by eq.~\eqref{eq:pool} and the logits ${\bf P}$ (the output immediately after the softmax layer of the classifier). For convenience, we denote the original graph as $G^o$ and the attacked graph as $G^a$ in this section. Correspondingly, the graph representation and logits for the original (attacked) graph are denoted as ${\bf u}^o$ (${\bf u}^a$) and ${\bf P}^o$ (${\bf P}^a$), respectively. To measure the difference in graph representation, we used the relative difference in terms of $2$-norm defined as $RC({\bf u}^o,{\bf u}^a) = \frac{\|{\bf u}^a - {\bf u}^o\|_2}{\|{\bf u}^o\|_2}$. The logits denote the probability distribution that the given graph belongs to each of the classes. Thus, we use the KL-divergence~\cite{kullback1997information} to measure the difference between the logits of the original and attacked graphs \begin{small}$KL({\bf P}^o, {\bf P}^a) = \sum\limits_{i=1}^C {\bf P}^o[i]\log\left(\frac{{\bf P}^o[i]}{{\bf P}^a[i]}\right)$\end{small},  where $C$ is the number of classes in the data set and ${\bf P}[i]$ denotes the logit for the $i$-th class. 

We perform the experiments on the REDDIR-MULTI-12K data set under the setting of allowing at most $3\%\cdot |\mathcal{E}|$ proposed rewiring operations. The results for the graph representation and logits are shown in Figure~\ref{fig:emb} and Figure~\ref{fig:logits}, respectively. The graphs in the testing set are separated in two groups, one group contains all the graphs successfully attacked by $\text{\m}$ (shown in Figure~\ref{fig:suc_emb} and Figure~\ref{fig:suc_logits}), and the other one contains those survived from \m's attack (shown in Figure~\ref{fig:fail_emb} and Figure~\ref{fig:fail_logits}). Note that, for comparison, we also include the results of Random-s on these two groups of graphs. In these figures, a single point represents a testing graph, the x-axis is the ratio $\frac{M}{|\mathcal{E}|}$, where $M$ is the number of rewiring operations $\text{\m}$ used before the attacking process terminating. Note that $M$ can be smaller than the budget as the process terminates once the attack successes.

As we can observed from the figures, compared with the Random-s, $\text{\m}$ can make more changes to both the graph representation and logits, using exactly the same number of proposed rewiring operations. Comparing Figure~\ref{fig:suc_emb} with Figure~\ref{fig:fail_emb}, we find that the perturbation generated by $\text{\m}$ affects the graph representation a lot even when it fails to attack the graph. This means our attack is perturbing the graph structure in a right way to fool the classifier, although it fails potentially due to the limited budget. Similar observation can be made when we compare Figure~\ref{fig:suc_logits} with Figure~\ref{fig:fail_logits}. 

\begin{figure}[!h]
	\centering 
	\begin{subfigure}[b]{0.48\linewidth}
		\centering
		\includegraphics[width=0.95\linewidth]{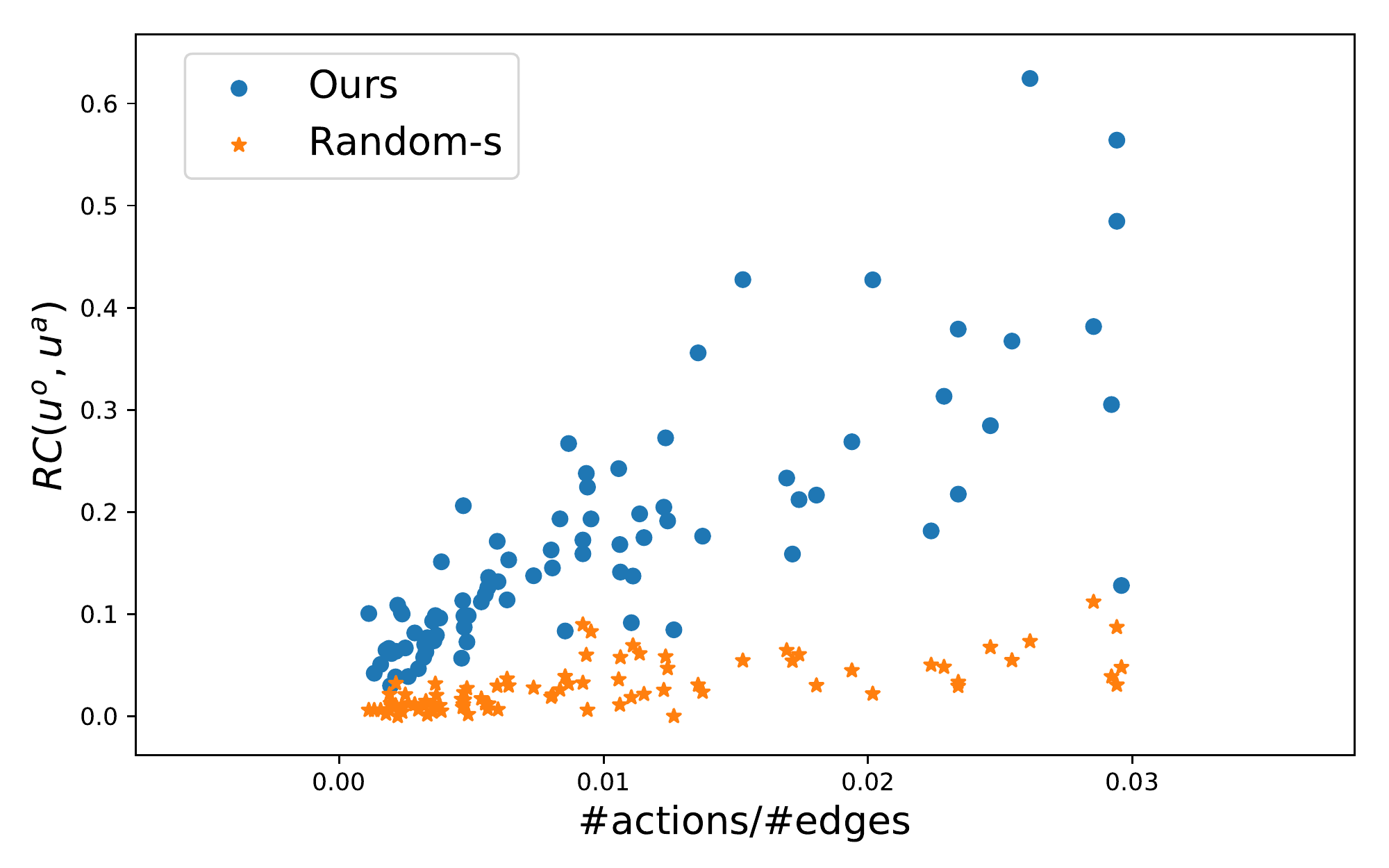}
		\caption{\small Succeeded graphs}
		\label{fig:suc_emb}
	\end{subfigure}
	\begin{subfigure}[b]{0.48\linewidth}
		\centering
		\includegraphics[width=0.95\linewidth]{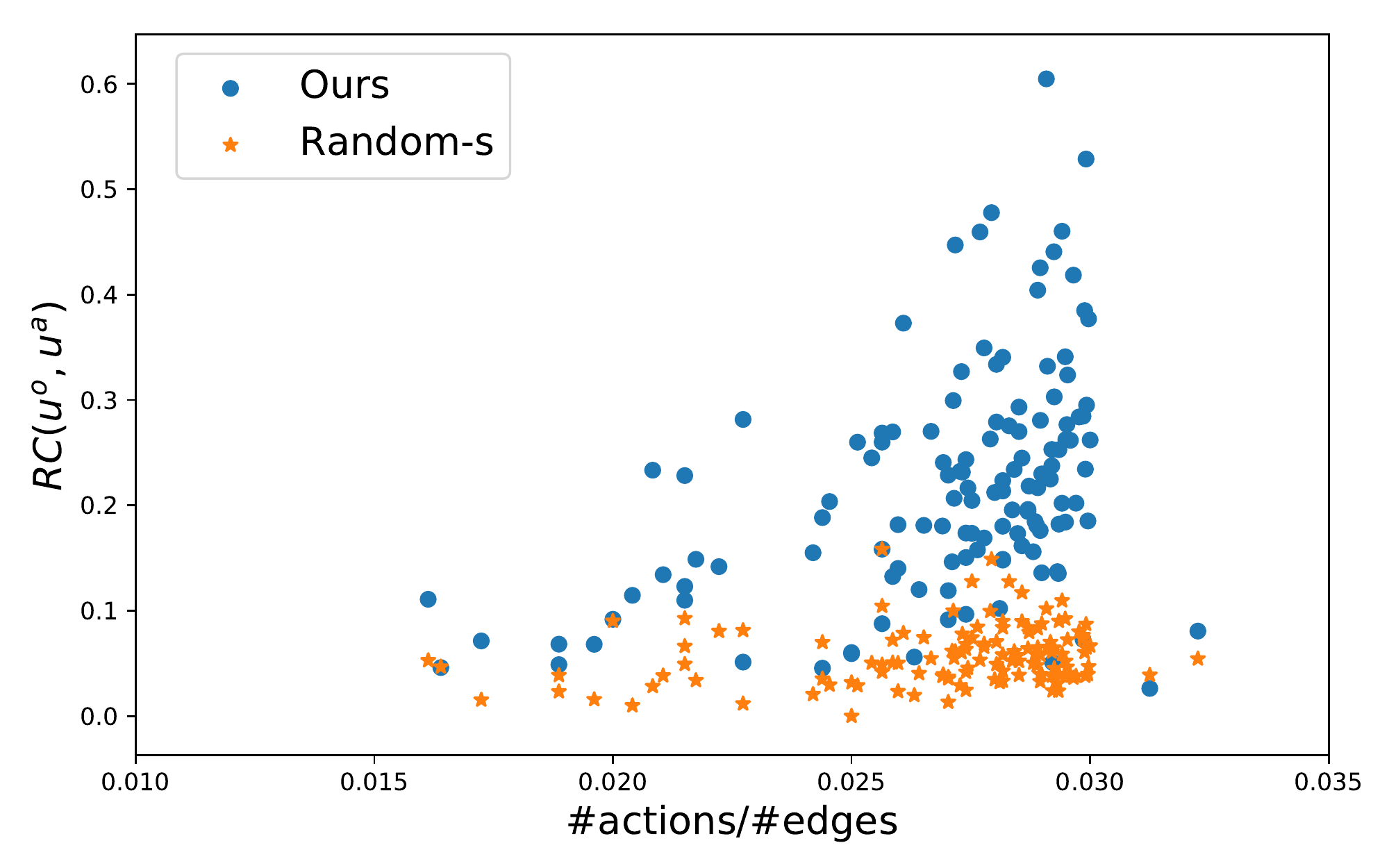}
		\caption{\small Failed graphs}
		\label{fig:fail_emb}
	\end{subfigure}
    \vspace*{-0.1in}
	\caption{The change of graph representation after attack}
    \vspace*{-0.2in}
	\label{fig:emb}
\end{figure}
\begin{figure}[!h]
	\centering 
	\begin{subfigure}[b]{0.48\linewidth}
		\centering
		\includegraphics[width=0.95\linewidth]{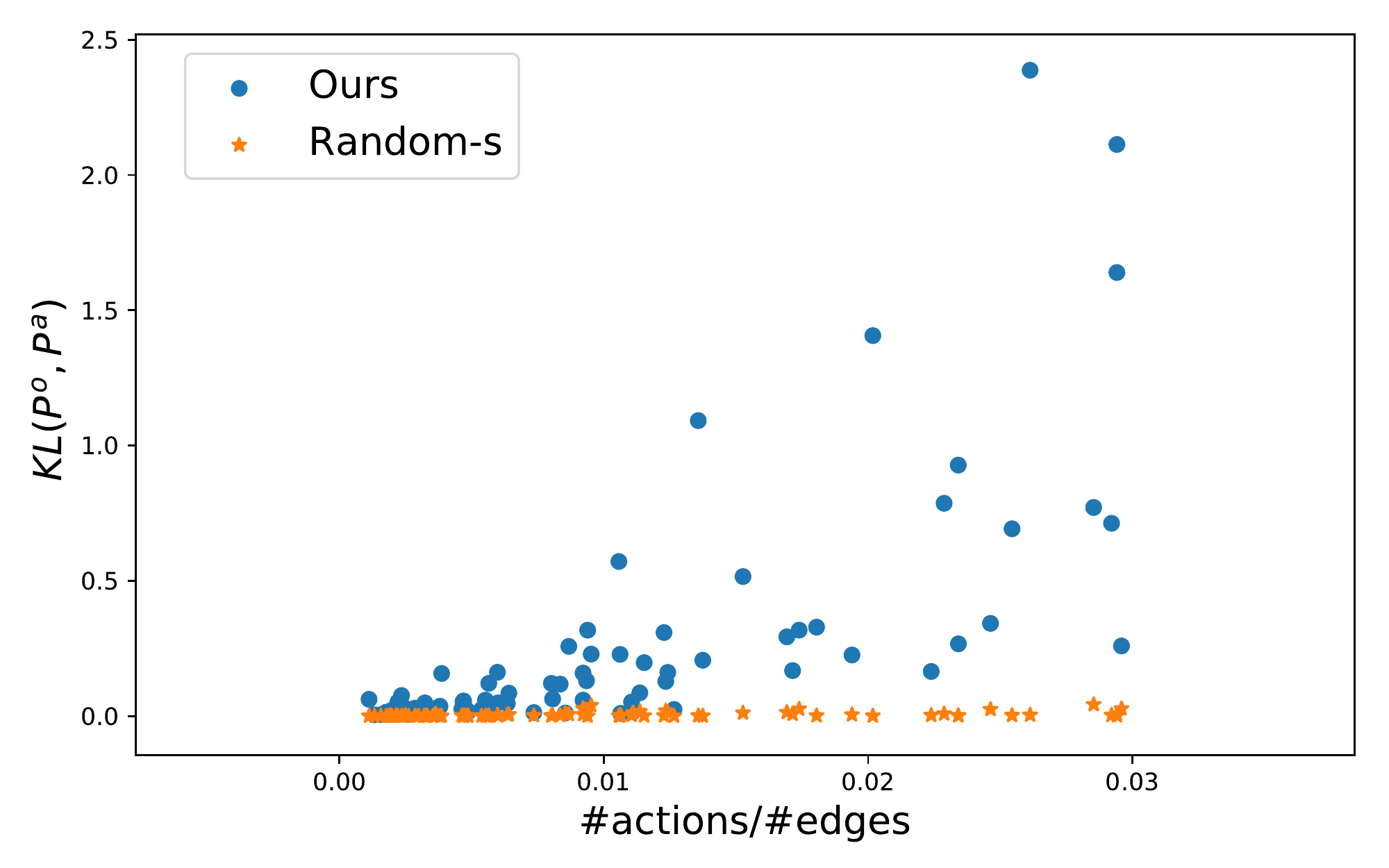}
		\caption{\small Succeeded graphs}
		\label{fig:suc_logits}
	\end{subfigure}
	\begin{subfigure}[b]{0.48\linewidth}
		\centering
		\includegraphics[width=0.95\linewidth]{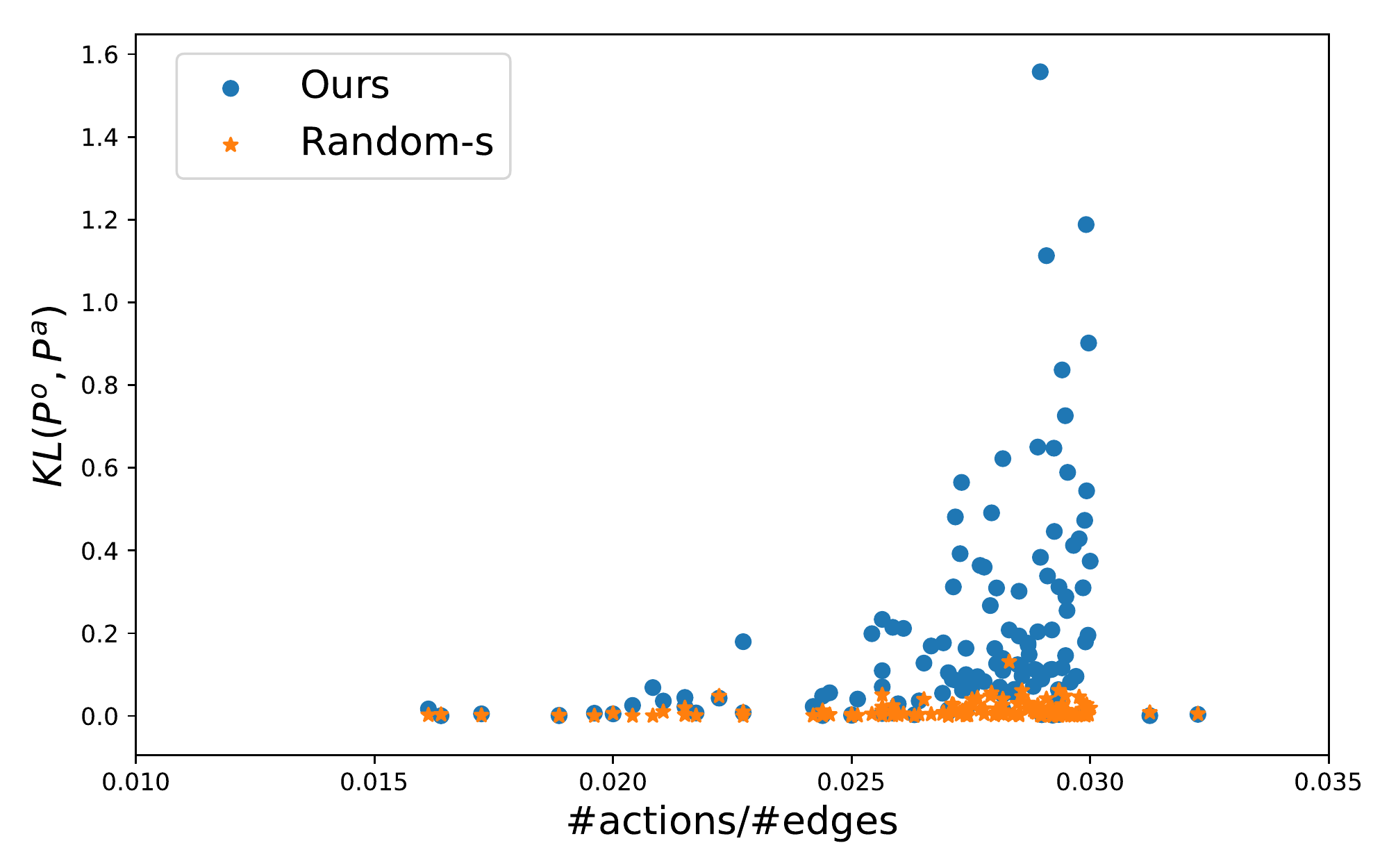}
		\caption{\small Failed graphs}
		\label{fig:fail_logits}
	\end{subfigure}
    \vspace*{-0.1in}
	\caption{The change of logits after attack}
    \vspace*{-0.1in}
	\label{fig:logits}
\end{figure}

\section{Related Work}\label{sec:related_work}
In recent years, adversarial attacks on deep learning models have attracted increasing attention in the area of computer vision. Many deep models are found to be easily fooled by adversarial samples, which are generated by adding deliberately designed unnoticeable perturbation to normal images~\citep{szegedy2013intriguing,goodfellow2014explaining}. More algorithms with different level access to the target classifier have been proposed, including white-box attack models, which have access to the gradients~\citep{moosavi2016deepfool,kurakin2016adversarial,carlini2017towards} and black-box attack model, which have limited access to the target classifier~\citep{chen2017zoo,cheng2018query,ilyas2018black}. 

Most of the aforementioned works are focusing in the computer vision domain, where the data sample can be represented in the continues space. Few attention has been payed into the discrete data structure such as graphs. Graph Neural Networks have been shown to bring impressive advancements to many different graph related tasks such as node classification and graph classification. Recent researches show that the graph neural networks are also venerable to adversarial attacks. \citep{zugner2018adversarial1} proposed a greedy algorithm to perform adversarial attack to node classification task. Their algorithm tries to change the label of a target node by modifying both the graph structure and node features. \citep{dai2018adversarial} proposed a deep reinforcement learning based attacker to attack both the node classification and the graph classification task. \citep{zugner2018adversarial_2} designed an algorithm to impair the overall performance of node classification based on meta learning. All the three mentioned methods modify the graph structure by adding or deleting edges. A more recent work~\cite{wang2018attack} on attacking node classifications proposed to modify the graph structure by adding fake nodes. In this work, we propose to modify the graph structure using rewiring, which is shown to make less noticeable changes to the graph structure. 
\section{Conclusion}\label{sec:conclusion}
In this paper, we proposed a graph rewiring operation, which affect the graph structure in a less noticeable way than adding/deleting edges. The rewiring operation preserves some basic graph properties such as number of nodes and number of edges. We then designed an attacker $\text{\m}$ based on the rewiring operations using reinforcement learning. Experiments in $3$ real world data sets show the effectiveness of the proposed framework. Analysis on how the graph representation and logits change while the graph being attacked provide us with some insights of the attacker.

\medskip

\small
\bibliographystyle{named}
\bibliography{neurnips.bib}

\appendix

\section{Graph Laplacian Based Measures}
Many important graph properties are based on the eigenvalues of the Laplacian matrix of a graph~\citep{chan2016optimizing}. Here we list few:
\begin{itemize}[leftmargin=*]
    \item {\bf Algebraic Connectivity} The algebraic connectivity of a graph $G$ is the second-smallest eigenvalue of its Laplacian matrix~\citep{fiedler1973algebraic}. Note that we only consider connected graphs in this work, so it is always larger than $0$. The larger the algebraic connectivity is, the more difficult it is to separate the graph into components (i.e., more edges need to be removed). The algebraic connectivity has previously been applied to measure network robustness~\cite{sydney2013optimizing}.
    \item {\bf Effective Graph Resistance} The effective graph resistance is a graph measure derived from the field of electric circuit analysis, where it is defined as the summation of effective resistance over all node pairs~\citep{ellens2011effective}. The effective graph resistance can be represented using the eigenvalues of Laplacian matrix as follows~\citep{ellens2011effective}
    \begin{align}
        R_e = |\mathcal{V}|\cdot \sum\limits_{i=2}^{|\mathcal{V}|} \lambda_i. 
    \end{align}
\end{itemize}

By Corollary~\ref{col:eff_rewiring}, we can represent the change of the algebraic connectivity $\lambda_2$ as:
\begin{align}
    \Delta\lambda_2 = (2{\bf x}_2[fir] - {\bf x}_2[thi] -{\bf x}_2[sec])({\bf x}_2[sec]-{\bf x}_2[thi])
\end{align}
According to the above discussion, $\Delta\lambda_2$ is expected to be smaller for the operation of rewiring to $2$-hop neighbor. Thus, the rewiring to $2$-hop neighbor operation is expected to perturb the algebraic connectivity less compared with adding an edge between two nodes that are far away from each other. A similar argument can be built for effective graph resistance. 

\section{Proof of Collary 1}
\begin{corollary}
For a given graph $G$ with Laplacian matrix ${\bf L}$, one proposed rewiring operation $(v_{fir},v_{sec},v_{thi})$ affects the eigen-value $\lambda_i$ by $\Delta\lambda_i$, for $i=1,
\dots,|\mathcal{V}|$, where 
\begin{align}
    \Delta\lambda_i = (2{\bf x}_i[fir] - {\bf x}_i[thi] -{\bf x}_i[sec])({\bf x}_i[sec]-{\bf x}_i[thi])
\end{align}
where ${\bf x}_i[index]$ denotes the $index$-th value of the eigenvector ${\bf x}_i$.
\label{col:eff_rewiring}
\end{corollary}
\begin{proof}
Let $\Delta {\bf L}$ denotes the change in the Laplacian matrix after applying the rewiring operation $(v_{fir},v_{sec},v_{thi})$ to graph $G$. Then we have $\Delta{\bf L}[fir,sec]=\Delta{\bf L}[sec,fir]=1$,  $\Delta{\bf L}[fir,thi]=\Delta{\bf L}[thi,fir]=-1$, $\Delta{\bf L}[sec,sec]=-1$, $\Delta{\bf L}[thi,thi]=1$ and $0$ elsewhere. Thus
\begin{align}
    \Delta \lambda_i& = {\bf x}_i^T\Delta {\bf L}{\bf x}_i \nonumber\\
    & = 2{\bf x}_i[fir] {\bf x}_i[sec] - {\bf x}_i[sec]^2   + {\bf x}_i[thi]^2 - 2{\bf x}_i[fir]{\bf x}_i[thi] \nonumber\\
    &= {\bf x}_i[thi]^2 - {\bf x}_i[sec]^2 + 2{\bf x}_i[fir]({\bf x}_i[sec]-{\bf x}_i[thi])\nonumber\\
    & = (2{\bf x}_i[fir] - {\bf x}_i[thi] -{\bf x}_i[sec])({\bf x}_i[sec]-{\bf x}_i[thi])\nonumber
\end{align}
which completes the proof.
\end{proof}

\section{Empirical Investigation of the Rewiring Operation}
In this section, we conduct experiments to empirically show the advancements of the proposed rewiring operator compared with the adding/deleting edge operator. We compare them from two perspectives: 1) connectivity after the attack and 2) change in eigenvalues after the attack. The experiments are carried out on the REDDIT-MULTI-12K dataset. On each of the graph successfully attacked by \m, we perform exactly the same number of deleting/adding edge operator on it. For connectivity, the average number of components in the clean graphs is $2.6$, this number becomes $3.02$ after the rewiring attack while it becomes $5.2$ after the deleting/adding edges attack. On the other hand, only $20\%$ of the graphs get more disconnected (having more components) after \m attack than the original ones, while $87\%$ of the graphs get more disconnected after the adding/deleting edges attack. Clearly, the rewiring operator is less likely to disconnect the graph. The comparison of the change in eigenvalues is shown in Figure~\ref{fig:eigen}, where we compare the change in different eigenvalues of the graphs after these two attacks. Specifically, we first compute the average relative change in the $i$-th eigenvalue after both attacks as follows:

\begin{align}
r_{\lambda_i}  = \frac{|\lambda^{ori}_i-\lambda^{attack}_i|}{\lambda^{ori}_i}, 
\end{align}
\begin{figure}[h!]
	\centering
	\includegraphics[scale=0.5]{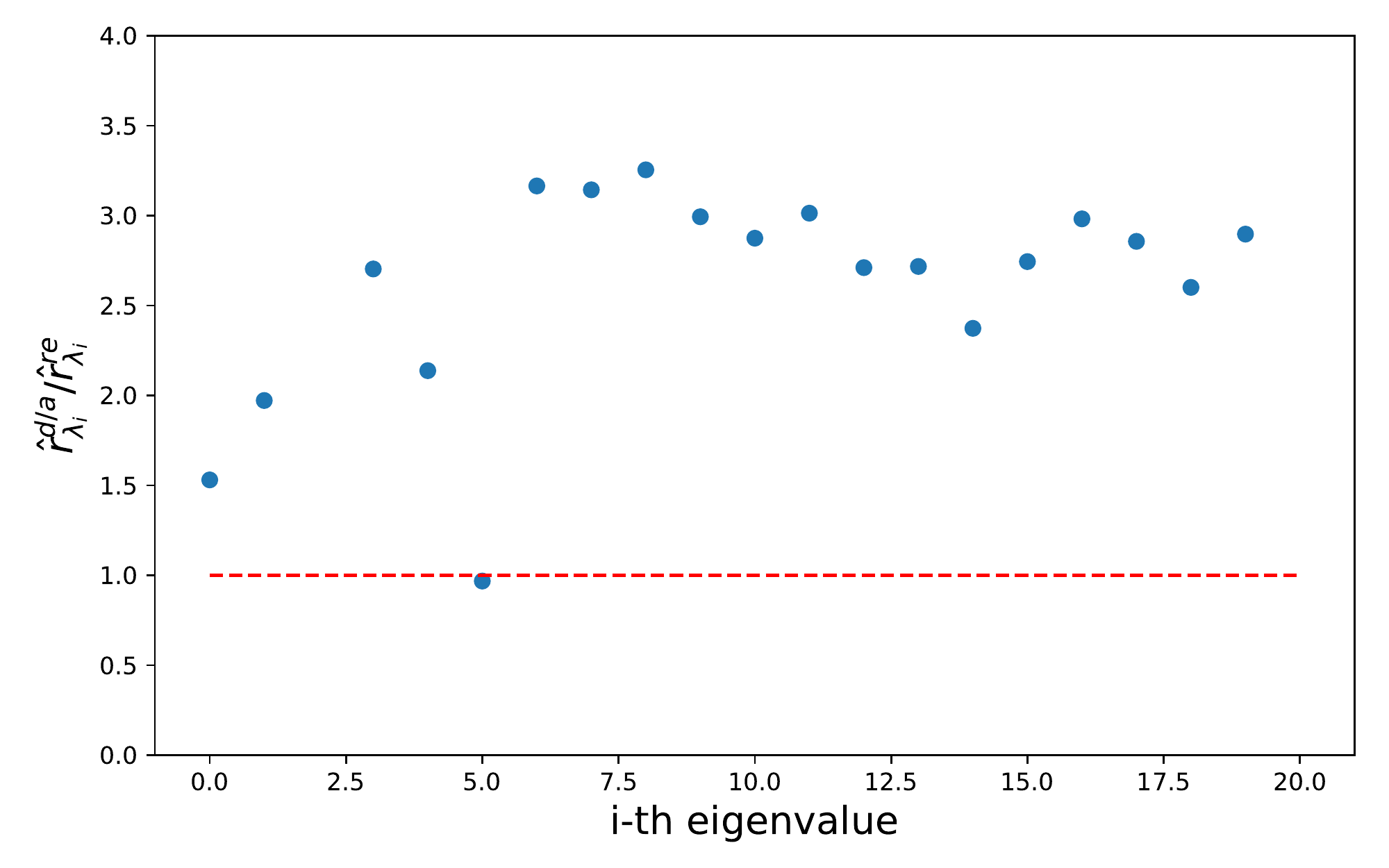}
	\caption{Comparison in the change of eigenvalues}
	\label{fig:eigen}
\end{figure}
where $\lambda^{ori}_i$ denotes the $i$-th eigenvalue of the clean graph while $\lambda^{attack}_i$ denotes the $i$-th eigenvalue of the attacked graph. 
We then take the average of the above value over all the succeeded graphs, which we denoted as $\Bar{r}_{\lambda_i}$. Specially, we use $\Bar{r}^{re}_{\lambda_i}$ to denote the average change ratio after $\m$  while using $\Bar{r}^{d/a}_{\lambda_i}$ to denote the average change ratio after deleting/adding edge attack. To compare the these two attacks, we calculate $\Bar{r}^{d/a}_{\lambda_i}/\Bar{r}^{re}_{\lambda_i}$ and the results are shown in Figure~\ref{fig:eigen}. The results show that in most of the cases, the deleting/adding edges attack makes much more changes to the eigenvalues as the value $\Bar{r}^{d/a}_{\lambda_i}/\Bar{r}^{re}_{\lambda_i}$ is way larger than $1$. 

By conducting these two experiments, we empirically conclude that the proposed re-wiring operator makes more subtle changes to graphs than existing methods.

\section{Statistics of the Datasets}
The statistics of the datasets are given in Table~\ref{tab:statistics}.
\begin{table}[h]
    \caption{Statistics of the data sets}
    \label{tab:statistics}
    \centering
    \begin{tabular}{c|c|c}
    \hline
      & \# graphs& \# labels\\
      \hline       REDDIT-MULTI-12K &  11,929&12\\
        \hline
            REDDIT-MULTI-5K &  4,999&5\\
            \hline 
        IMDB-MULTI& 1,500 &3\\
        \hline
    \end{tabular}
    \vspace{1mm}
\end{table}




\end{document}